\documentclass[10pt,twocolumn,letterpaper]{article}

\usepackage{cvpr}
\usepackage{times}
\usepackage{epsfig}
\usepackage{graphicx}
\usepackage{amsmath}
\usepackage{amssymb}
\usepackage{bm}
\usepackage[dvipsnames]{xcolor}
\usepackage{tikz,subfigure,sectsty}
\usepackage[noend]{algpseudocode}
\usepackage{amsthm}
\usepackage{cite}

\newtheorem{prop}{Proposition}

\newcommand{\tens}[1]{%
  \mathbin{\mathop{\otimes}\limits_{#1}}%
}

\newcommand{\bx}{{\bm x}}

\definecolor{ballblue}{rgb}{0.13, 0.67, 0.8}
\definecolor{newbrown}{rgb}{0.65, 0.16, 0.16}	
\definecolor{cblue}{rgb}{0.64, 0.76, 0.68}
\definecolor{cerise}{rgb}{0.87, 0.19, 0.39}
\definecolor{jasper}{rgb}{0.84, 0.23, 0.24}
\definecolor{mblue}{rgb}{0.15, 0.38, 0.61}
\definecolor{ballblue}{rgb}{0.13, 0.67, 0.8}
\definecolor{newbrown}{rgb}{0.65, 0.16, 0.16}	
\definecolor{cblue}{rgb}{0.64, 0.76, 0.68}
\definecolor{cerise}{rgb}{0.87, 0.19, 0.39}
\definecolor{jasper}{rgb}{0.84, 0.23, 0.24}
\definecolor{mblue}{rgb}{0.15, 0.38, 0.61}
\definecolor{asparagus}{rgb}{0.53, 0.66, 0.42}
\definecolor{carrotorange}{rgb}{0.93, 0.57, 0.13}
		
\definecolor{mycolor1}{HTML}{00e600}
\definecolor{mycolor2}{HTML}{990000}
\definecolor{g}{HTML}{009900}
\definecolor{r}{HTML}{990000}
\definecolor{b}{HTML}{000099}
\usetikzlibrary{decorations.pathmorphing}
\usetikzlibrary{patterns}
\usetikzlibrary{decorations.pathmorphing}
    
\usepackage{pgfplots}
\pgfplotsset{width=7cm,compat=1.5}
\usepackage{authblk}

\newcommand{\myparagraph}[1]{\vspace{0.4em}\noindent\textbf{#1}}


\usepackage[pagebackref=true,breaklinks=true,letterpaper=true,colorlinks,bookmarks=false]{hyperref}

\cvprfinalcopy 


\ifcvprfinal\pagestyle{empty}\fi
\begin{document}

\title{Local Temporal Bilinear Pooling for Fine-Grained Action Parsing}

\author[1,2]{Yan Zhang}
\author[2,3]{Siyu Tang}
\author[2]{Krikamol Muandet}
\author[1]{Christian Jarvers}
\author[1]{Heiko Neumann}
\affil[1]{Institute of Neural Information Processing, Ulm University, Ulm, Germany 
}
\affil[2]{Max Planck Institute for Intelligent Systems, T\"{u}bingen, Germany
}
\affil[3]{University of T\"{u}bingen, T\"{u}bingen, Germany}

\maketitle
\thispagestyle{empty}

\begin{abstract}
   Fine-grained temporal action parsing is important in many applications, such as daily activity understanding, human motion analysis, surgical robotics and others requiring subtle and precise operations over a long-term period. In this paper we propose a novel bilinear pooling operation, which is used in intermediate layers of a temporal convolutional encoder-decoder net. In contrast to previous work, our proposed bilinear pooling is learnable and hence can capture more complex local statistics than the conventional counterpart. In addition, we introduce {\em exact} lower-dimension representations of our bilinear forms, so that the dimensionality is reduced without suffering from information loss nor requiring extra computation. We perform extensive experiments to quantitatively analyze our model and show the superior performances to other state-of-the-art pooling work on various datasets. 
\end{abstract}

\section{Introduction}
\label{sec:introduction}
Parsing fine-grained actions over time is important in many applications, which require understanding of subtle and precise operations over long-term periods, e.g. daily activities \cite{li2015delving}, surgical robots \cite{ahmidi2017dataset}, human motion analysis \cite{zhang2018temporal} and animal behavior analysis in the lab \cite{mathis2018deeplabcut}.
Given a video or a generic time sequence of feature vectors, an action parsing algorithm aims at assigning each frame an action label, such that the entire sequence is partitioned into several disjoint semantic action primitives. Thus, tasks of action recognition, temporal semantic segmentation and action detection in untrimmed videos can be solved in one framework.

Recently, fine-grained action parsing algorithms based on deep convolutional nets are highly effective. For example, the method proposed in \cite{lea2016segmental} and \cite{lea_2017_cvpr} first extracts frame-wise feature vectors via a spatial convolutional net, and then assigns action labels to individual frames via a temporal convolutional encoder-decoder (TCED) architecture. 
As reported, such TCED net outperforms other methods on challenging fine-grained action datasets of various scenarios.


While being straightforward, a notable caveat of the TCED architecture in \cite{lea_2017_cvpr} is that the max pooling operation embedded between convolutional layers in the encoder ignores high-order temporal structures, and hence cannot differentiate two fine-grained actions with identical first-order but different second-order statistics. 
Taking grasping object by hand as an example, when the feature vector of each frame is the concatenation of 3D positions of the finger tips, max pooling on several consecutive frames yields the hand position, and hence tells where to grasp the object. In parallel, the second-order information can indicate the finger scatter, and hence tells how to grasp the object. Thus, different orders of information are rather independent and complementary to precisely describe an action. Without the second-order information, it is hardly able to distinguish whether to grasp a coin or a book at the same position. 
Motivated by this example, as well as several recent studies showing that bilinear pooling outperforms first-order pooling on fine-grained tasks (e.g. \cite{carreira2012semantic,gao2016compact,yu2018statistically}), we aim at introducing bilinear pooling into the TCED net, so that second-order statistics can be incorporated to produce better fine-grained action parsing results. 
We refer to Sec. \ref{sec:max_vs_bi} for detailed analysis of the benefits of second-order information.


However, combining such two methods is highly non-trivial, which requires to overcome drawbacks of conventional bilinear pooling: (1) The conventional bilinear pooling is designed for visual classification. Thus, it aggregates all the features globally, destroying the local data structure which is important for semantic segmentation. (2) The conventional bilinear pooling aggregates the outer products of the feature vectors by averaging, and hence loses representativeness when the real data distribution is complex. (3) The conventional bilinear pooling lifts the feature dimension from $d$ to $d^2$, causing parameter proliferation in the neural net and expensive computational cost.

In this work we extend the conventional bilinear pooling from several aspects and make it suitable for fine-grained action parsing. Specifically, we make the following
contributions:
(1) To enrich the representativeness, we decouple the first and second-order components from the bilinear form, and replace the averaging by convolution of a learnable filter. In this case, the proposed bilinear form is adaptive to the data and 
guided by the training objective.
(2) To reduce the dimensionality {\em without} suffering from information loss or requiring extra computation, we propose lower-dimensional feature mappings than the explicit bilinear compositions. Such feature mapping is equivalent to the bilinear form, in the sense that the associated kernel function, and hence the {\em reproducing-kernel Hilbert space} (RKHS), is identical.
(3) We perform extensive experiments to investigate our novel bilinear pooling methods, and show that the proposed method consistently improves or is on-par with the performance of the state-of-the-art methods on diverse datasets.
To our knowledge, we are the first to employ bilinear pooling in a convolutional encoder-decoder architecture for fine-grained action parsing over time.

\section{Related work}
\label{sec:related_work}

\myparagraph{Fine-grained temporal action parsing.}
\cite{fathi2013modeling} proposes to learn object and material states, and partition actions by detecting the state transitions. 
\cite{richard2016temporal} applies a statistical language model to capture action temporal dynamics. 
\cite{singh2016first} proposes an Ego-ConvNet incorporating two streams for extracting spatial features and spatiotemporal features respectively from pre-defined video segments. The results are improved when combining Fisher vectors \cite{6751336} from spatial and optical flow descriptors \cite{wang2015action}. 
\cite{singh2016multi} proposes a multi-modal bidirectional LSTM model to generate a label sequence of a video to incorporate forward and backward temporal dynamics. 
\cite{lea2016learning} proposes a conditional random field with skip connections in the temporal domain and starting-and-ending frame priors, which is learned via a structured support vector machine. \cite{lea2016segmental} proposes a multi-modal deep neural net with the similar structure of the VGG net. After training and extracting frame-wise features, a temporal convolutional net and a semi-Markov conditional random field are applied to produce the final segmentation result. Based on the spatial features from \cite{lea2016segmental}, \cite{lea_2017_cvpr} proposes two kinds of temporal convolutional networks with the encoder-decoder architecture. The first net comprises layers of convolution and max pooling; the second net uses dilated temporal convolution and skipped connections to capture long-range temporal structures. Our work uses the temporal encoder-decoder architecture proposed by \cite{lea_2017_cvpr}. To capture second-order statistics, we replace the max pooling in \cite{lea_2017_cvpr} by our proposed bilinear pooling operations. We compare our method with others in Sec. \ref{sec:experiment}. Although more complicated architectures, e.g. \cite{Lei_2018_CVPR} \cite{mac2018locally}, can also improve the performance, our work focuses on the pooling operation and hence investigating more advanced architectures is out of our scope.

\myparagraph{Bilinear pooling.} Bilinear pooling (or second-order pooling) is widely used in fine-grained visual classification \cite{carreira2012semantic,me_tensor_tech_rep,moghimi2016boosted,li2017second,lin2015bilinear,lin2018bilinear,lin2017improved,lin2018second,gao2016compact,personreid:eccv:2018,kong2017low,li2017towards,wei2018grassmann,yu2018hierarchical, yu2018statistically,wang2017g2denet,gou2018monet,simon2017generalized}, visual questioning answering \cite{fukui2016multimodal,kim2016hadamard,yu2017multi}, feature fusion and disentangling \cite{lin2015bilinear, lin2018bilinear,tenenbaum2000separating,diba2017deep,feichtenhofer2016convolutional,hu2018deep}, action recognition \cite{yue2018compact,wang2017non,girdhar2017attentional,hu2018deep,feichtenhofer2016convolutional, cherian2017higher} and other tasks. In deep neural nets, bilinear pooling is mostly used only once before the classification layer, e.g. in \cite{lin2015bilinear,lin2018bilinear,lin2018second,gao2016compact,personreid:eccv:2018,diba2017deep,feichtenhofer2016convolutional,yu2018statistically,wang2017g2denet,simon2017generalized}, or embedded within the classifier, e.g. in \cite{wei2018grassmann,kong2017low}.

There are three major research directions regarding bilinear pooling: (1) Dimension reduction while minimizing information loss. \cite{gao2016compact,gou2018monet,yue2018compact} use tensor sketch \cite{pham2013fast} to reduce the dimension of vectorized bilinear forms. The studies of \cite{lin2018bilinear,yu2018statistically} use parametric dimension reduction approaches, which can be learned via back-propagation. The work in \cite{kim2016hadamard} \cite{yu2018hierarchical} finds a low-rank approximation of the bilinear forms, so as to convert vector outer product into Hadamard multiplication for cheap computation. \cite{wei2018grassmann,me_tensor_tech_rep,lin2017improved} utilize singular value decomposition (SVD), which can be used to select principle components and increase the performance at a higher computational cost. (2) Multiple bilinear pooling layers in deep neural nets. \cite{hu2018deep} factorizes bilinear composition into consecutive matrix multiplications along different dimensions. \cite{yu2018hierarchical} uses the low-rank approximation as in \cite{kim2016hadamard}, and aggregates features hierarchically. \cite{dai2017fason} fuses first and second-order information across layers to improve texture recognition. (3) Methods to capture richer feature statistics, so that more complex distributions can be represented.
\cite{koniusz2017higher} proposes a higher-order pooling scheme to extract feature co-occurrences of visual words based on linearization of a higher-order polynomial kernel. \cite{cui2017kernel} applies tensor sketch to generate a compact explicit feature map up to p-th order. Despite increasing the representativeness, more computational loads are caused. \cite{cherian2017higher} linearizes a Gaussian kernel to derive a higher-order descriptor from the late fusion of CNN classifier scores for action recognition.

The novelties of our bilinear pooling method contribute to all the three research directions. First, we prove that our proposed bilinear forms {correspond to feature mappings of some reproducing-kernel Hilbert spaces (RKHSs) endowed with polynomial kernels. We then find \emph{exact} lower-dimensional alternative feature representations that retain the kernel evaluations in these RKHSs. As a result, the dimension can be reduced {\em without} information loss and additional computation.} Second, our bilinear forms are used in multiple layers in the temporal convolutional encoder-decoder architecture, instead of being only used at the network top. Third, the first and second-order components of the bilinear forms can be decoupled and each of them has different {\em learnable} weights. Despite staying in second-order, the learnable weights enable to create adaptive local statistics to the data, and hence can capture more complex statistics than the conventional bilinear pooling.

\section{Method}
\label{sec:method}

\subsection{Preliminaries}

\myparagraph{Temporal Convolutional Encoder-Decoder.}
The TCED net takes a temporal sequence of feature vectors and assigns an action label to each input feature vector. It comprises a stack of encoders and decoders, and a fully connected module to generate frame-wise action labels. Each encoder comprises a 1D temporal convolution layer with an activation function and a pooling layer to extract local statistics. 
After each encoder, the temporal resolution is halved. The decoder has a symmetric structure with the encoder, composed of a 1D temporal convolution layer and a upsampling layer to perform nearest-neighbor interpolation. After each decoder, the temporal resolution is doubled. The fully connected module incorporates a time-distributed fully connected layer to perform linear transformation at each time instant. Then each output is passed to a softmax function to fit the ground truth one-hot encoded action label. We refer to \cite[Figure 1]{lea_2017_cvpr} for details. 


\myparagraph{Bilinear Pooling.}
Given a set of generic feature vectors with $\bx \in \mathcal{X}$, the conventional bilinear pooling \cite{carreira2012semantic,li2017second,lin2015bilinear,lin2018bilinear} can be given by

\begin{equation}
\label{eq:verybasic}
    \mathcal{B(\mathcal{X}}) =vec \left( \frac{1}{|\mathcal{X}|}  \sum_{\bx \in \mathcal{X}} \bx \tens{} \bx \right),
\end{equation}
where $\tens{}$ denotes the vector outer product, $|\cdot|$ denotes the cardinality of the feature set and $vec(\cdot)$ denotes tensor vectorization. In this case, the bilinear composition gives a description of the feature set incorporating feature channel correlations. 

\subsection{Local Temporal Bilinear Composition}
\label{sec:method1}

In contrast to many studies that perform global pooling for visual classification, we define the feature set in Eq. \eqref{eq:verybasic} as a local temporal neighborhood set to preserve the temporal structure.
Specifically, given a temporal sequence of features {${\bm X} = \{\bx_1,...,\bx_T \}$ with $\bx_t \in \mathbb{R}^{d}$ for $t\in{1,2,...,T}$}, the local temporal bilinear composition which {\em couples} the first and second-order information is given by

\begin{equation}
\mathcal{B}_c({\bm x}_t) =  vec\left( \frac{1}{|\mathcal{N}(t)|} \cdot \sum_{\tau \in \mathcal{N}(t)} {\bm x}_{\tau} \tens{} {\bm x}_{\tau} \right),
\label{eq:bilinear_standard}
\end{equation}
where $\mathcal{N}(t)$ denotes the local temporal neighborhood set centered at time $t$. As the averaging operation ignores the real distribution in $\mathcal{N}(t)$, we enrich the representativeness of bilinear pooling with the following two perspectives.

%

\subsubsection{Decoupling First and Second-order Information}
Inspired by a physical fact that the position and the velocity of an object in motion can indicate the dynamic state independently and complementarily, we consider to separate first and second-order components from the bilinear form to describe the action via separate attributes.
Provided the feature time sequence $\{\bx_1,...,\bx_T\}$, the first-order component ${\bm \mu}$, the second-order component ${\bm \Sigma}$, and the decoupled bilinear form $\mathcal{B}_d(\cdot)$ are given by

\begin{align}
\label{eq:decoupled_bilinear}
&{\bm \mu}_t = \frac{1}{|\mathcal{N}(t)|} \cdot \sum_{\tau \in \mathcal{N}(t)}{\bm x}_{\tau}, \\
&{\bm \Sigma}_t = \frac{1}{|\mathcal{N}(t)|}  \cdot \sum_{\tau \in \mathcal{N}(t)} ({\bm x}_{\tau}-{\bm \mu}_t) \tens{}  ({\bm x}_{\tau}-{\bm \mu}_t) \text{ and} \\
&\mathcal{B}_d({\bm x}_t) = \Big( {\bm \mu}_t^T, vec\big({\bm \Sigma}_t \big) \Big)^T,
\end{align}
in which one can note $\mathcal{B}_d({\bm x}_t) \in \mathbb{R}^{d(d+1)}$. Since the first-order component is equivalent to the mean and the second-order component is equivalent to the covariance, such decomposed bilinear form can precisely describe a Gaussian distribution.

\subsubsection{Adapting local statistics to data}
When the local statistics is more complex than Gaussian distribution, only using mean and covariance is not sufficient. Rather than applying higher-order statistics (e.g. \cite{koniusz2017higher, cui2017kernel}), we consider statistics up to the second-order to retain a low computational load. Since the averaging operation in Eq. \eqref{eq:bilinear_standard} and Eq. \eqref{eq:decoupled_bilinear} can be regarded as convolution by a box filter, we generalize it to convolution by a learnable filter. Thus, the local statistics is adaptive to the data and the network objective. 
Specifically, for the coupled bilinear form the learnable version is given by

\begin{equation}
\mathcal{B}_c({\bm x}_t) =  vec\left(\sum_{\tau \in \mathcal{N}(t)} \omega_{\tau}{\bm x}_{\tau} \tens{} {\bm x}_{\tau} \right),
\label{eq:bilinear1}
\end{equation}
where the filter weights $\{ \omega_{\tau} \}$ are shared by all temporal neighbor sets, i.e. $\mathcal{N}(t)$ with $t=1,2,...,T$ .

For the decoupled bilinear form, the learnable version is given by
\begin{align}
&{\bm \mu}_t = \sum_{\tau \in \mathcal{N}(t)} p_{\tau}{\bm x}_{\tau}, \\
&{\bm \Sigma}_t = \sum_{\tau \in \mathcal{N}(t)} q_{\tau}({\bm x}_{\tau}-{\bm \mu}_t) \tens{} ({\bm x}_{\tau}-{\bm \mu}_t) \text{ and} \\
&\mathcal{B}_d({\bm x}_t) = \Big( {\bm \mu}_t^T, vec\big({\bm \Sigma}_t \big) \Big)^T,
\label{eq:bilinear2}
\end{align}
where the filter weights $\{ p_{\tau} \}$ and $\{ q_{\tau} \}$ are shared by all temporal neighbor sets. 


\subsection{Normalization}
\label{sec:normalization}
Our bilinear forms are applied in several intermediate layers of the neural net. Due to the vector outer product, small values become smaller and large values become larger as the data flows from the net bottom to top, leading to diverging spectra in the bilinear forms and very sparse features before the final classification layer. 
Here we present three normalization methods that can constrain the bilinear form spectra or densify the features.

\myparagraph{$l_2$ normalization.} We can apply $l_2$ normalization after each bilinear pooling. Since the $l_2$ norm of a vectorized matrix is equivalent to its Frobenius norm and also equivalent to the Frobenius norm of the singular value matrix after SVD, the $l_2$ normalization on the vectorized bilinear form can constrain the matrix spectra between 0 and 1, and hence eliminates the diverging spectra problem. 



\myparagraph{Regularized power normalization.} When using element-wise power normalization, as e.g. in \cite{carreira2012semantic,me_tensor_tech_rep,lin2017improved}, or matrix \cite{me_tensor_tech_rep, lin2017improved} or higher-order tensor \cite{me_tensor_tech_rep} spectral power normalization in intermediate layers of a neural net, gradients tend to explode during back-propagation when small or zero values are encountered.
We propose a regularized version and use it as an activation function after each 1D convolution layer, so that features in the net are always densified. The formula is given by

\begin{equation}
\label{eq:rpn}
\sigma(x) = \text{\bf RPN} (x) = \text{sign}(x) \cdot \left( \sqrt{|x|+\theta^2} - \sqrt{\theta^2} \right),
\end{equation}
where $\text{\bf RPN}$ stands for {\em regularized power normalization} and $\theta$ is a learnable parameter. 
As $\theta \to 0$, the {\bf RPN} function converges to the standard power normalization. There exist many studies to make power normalization well-behaved, yet detailed discussion on such topic is out of our scope. One can see \cite{koniusz2018deeper} for other smooth power normalization methods which are proposed for deep neural nets.

\myparagraph{Normalized ReLU.} \cite{lea_2017_cvpr} proposes a normalized ReLU activation function, which allows fast convergence and yields superior results to other activation functions. The formula is given by

\begin{equation}
\label{eq:nrelu}
\sigma(\bx) = \text{\bf NReLU}(\bx) = \frac{ReLU(\bx)}{max(ReLU(\bx)) + \epsilon},
\end{equation}
where $\text{\bf NReLU}$ stands for normalized ReLU, $\epsilon$ is a small positive constant and the $max(\cdot)$ operation selects the maximal value in each feature vector. Since the Frobenius norm is bounded by the max norm of a matrix \cite{golub2012matrix}, {\bf NReLU} is also able to constrain the matrix singular values and hence eliminates the diverging spectra issue. Nevertheless, it can lead to sparse features.

\subsection{Low-dimensional Representation}
\label{sec:dim_reduce}
Given an arbitrary feature vector sequence, the bilinear forms $\mathcal{B}_c$ and $\mathcal{B}_d$ can capture local temporal statistics which are adaptive to the data. However, the feature dimension is considerably increased. Specifically, given ${\bm x}_t \in \mathbb{R}^d$, we have $\mathcal{B}_c( {\bm x}_t  ) \in \mathbb{R}^{d^2} $ and $\mathcal{B}_d( {\bm x}_t  ) \in \mathbb{R}^{d(d+1)} $. To address such issue, we propose alternative lower-dimensional representations to the explicit bilinear forms defined in Eq. \eqref{eq:bilinear1} and Eq. \eqref{eq:bilinear2}. Comparing to other dimension reduction methods introduced in Sec. \ref{sec:related_work}, {our method is \emph{exact} which means it introduces \emph{neither} information loss as in approximation methods nor additional computational costs as in SVD.}

We first show that  $\mathcal{B}_c( \cdot  )$ and $\mathcal{B}_d( \cdot  )$ are feature mappings associated with reproducing kernel Hilbert spaces (RKHSs) \cite{Scholkopf01:LKS}, for which the kernels are seconds-order homogeneous and inhomogeneous polynomials, respectively. Such property can be extended to arbitrary $p$-th order polynomials. One can see more details in \cite[Chapter 3]{Muandet17:KME}.

\begin{prop}
\label{prop:inner_product}

Given $\{\bx_1,...,\bx_T\}$, we have
\begin{equation}
\left\langle \mathcal{B}_c({\bm x_{i}}), \mathcal{B}_c({\bm x_{j}}) \right\rangle_{\mathbb{R}^{d^2}} = \sum_{\tau \in \mathcal{N}(i)} \sum_{\tau' \in \mathcal{N}(j)}\omega_{\tau} \omega_{\tau'}  \left\langle  {\bm x}_{\tau},  {\bm x}_{\tau'}  \right\rangle_{\mathbb{R}^d}^2, 
\end{equation}

and 

\begin{equation}
\label{eq:decouple_form}
\begin{split}
&\left\langle \mathcal{B}_d({\bm x_{i}}), \mathcal{B}_d({\bm x_{j}}) \right\rangle_{\mathbb{R}^{d(d+1)}} = \langle {\bm \mu}_i,{\bm \mu}_j \rangle_{\mathbb{R}^d} \\
&+ \sum_{\tau \in \mathcal{N}(i)} \sum_{\tau' \in \mathcal{N}(j)} q_{\tau}  q_{\tau'}  \left\langle  {\bm x}_{\tau}-{\bm \mu}_i,  {\bm x}_{\tau'}-{\bm \mu}_j  \right\rangle_{\mathbb{R}^{d^2}}^2, 
\end{split}
\end{equation}
in which the notations are referred to the definitions in Eq. \eqref{eq:bilinear1} and Eq. \eqref{eq:bilinear2}.

\end{prop}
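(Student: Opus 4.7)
The plan is to reduce both identities to a single algebraic fact about rank-one matrices, namely that for vectors $\bm u, \bm v \in \mathbb{R}^d$ one has $\langle vec(\bm u \tens{} \bm u), vec(\bm v \tens{} \bm v) \rangle_{\mathbb{R}^{d^2}} = (\bm u^T \bm v)^2$. This is immediate from the standard identification $\langle vec(A), vec(B) \rangle_{\mathbb{R}^{d^2}} = \mathrm{tr}(A^T B)$ applied to $A = \bm u \bm u^T$, $B = \bm v \bm v^T$, together with $\mathrm{tr}(\bm u \bm u^T \bm v \bm v^T) = \mathrm{tr}(\bm v^T \bm u \bm u^T \bm v) = (\bm u^T \bm v)^2$. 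I will state this as a one-line lemma and then apply it term-by-term.

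For the first claim, I would write $\mathcal{B}_c(\bm x_i) = vec\bigl(\sum_{\tau \in \mathcal{N}(i)} \omega_\tau \bm x_\tau \tens{} \bm x_\tau\bigr)$ and similarly for $j$, expand $\langle \mathcal{B}_c(\bm x_i), \mathcal{B}_c(\bm x_j)\rangle$ by bilinearity of the inner product, pull the scalars $\omega_\tau \omega_{\tau'}$ out of each pair of sums, and apply the rank-one identity to each pair $(\bm x_\tau, \bm x_{\tau'})$. This yields exactly $\sum_{\tau \in \mathcal{N}(i)} \sum_{\tau' \in \mathcal{N}(j)} \omega_\tau \omega_{\tau'} \langle \bm x_\tau, \bm x_{\tau'} \rangle^2$.

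For the second claim, I would first observe that since $\mathcal{B}_d(\bm x_t)$ is the concatenation $(\bm \mu_t^T, vec(\bm \Sigma_t))^T$, the inner product splits block-wise as
\begin{equation*}
\langle \mathcal{B}_d(\bm x_i), \mathcal{B}_d(\bm x_j) \rangle = \langle \bm \mu_i, \bm \mu_j \rangle_{\mathbb{R}^d} + \langle vec(\bm \Sigma_i), vec(\bm \Sigma_j) \rangle_{\mathbb{R}^{d^2}}.
\end{equation*}
The first-order term already matches the claimed right-hand side. For the second-order term, I would substitute the definitions $\bm \Sigma_i = \sum_{\tau \in \mathcal{N}(i)} q_\tau (\bm x_\tau - \bm \mu_i) \tens{} (\bm x_\tau - \bm \mu_i)$ and analogously for $\bm \Sigma_j$, then repeat the expansion from the first part, now with the centered vectors $\bm x_\tau - \bm \mu_i$ and $\bm x_{\tau'} - \bm \mu_j$ and weights $q_\tau, q_{\tau'}$ playing the roles that $\bm x_\tau, \bm x_{\tau'}$ and $\omega_\tau, \omega_{\tau'}$ played before. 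Applying the rank-one identity once more produces the squared inner products of centered vectors that appear in Eq.~\eqref{eq:decouple_form}.

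There is no conceptual obstacle here: the proof is essentially a bookkeeping exercise relying on bilinearity and one trace identity. The only mild subtlety is keeping the indices separate across the two neighbourhoods $\mathcal{N}(i)$ and $\mathcal{N}(j)$ when expanding the double sum, and noting that the subscript $\mathbb{R}^{d^2}$ on the squared inner product in the statement should be read as $\mathbb{R}^d$ (the centered vectors live in $\mathbb{R}^d$; the square is what would, if undone, correspond to an inner product of vectorised outer products in $\mathbb{R}^{d^2}$). No appeal to the RKHS framework is actually needed to verify the identities themselves; that interpretation is a corollary once the two formulas are established, since they exhibit the two bilinear mappings as feature maps of polynomial kernels of degree two.
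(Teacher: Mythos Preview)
Your proposal is correct and follows essentially the same route as the paper: expand by bilinearity, pull the scalar weights outside the double sum, and then reduce each cross term $\langle vec(\bm u \tens{} \bm u), vec(\bm v \tens{} \bm v)\rangle$ to $\langle \bm u, \bm v\rangle^2$, with the decoupled case handled by first splitting the concatenated vector into its $\bm\mu$ and $vec(\bm\Sigma)$ blocks. Your write-up is in fact slightly more explicit than the paper's, since you isolate the rank-one identity as a lemma and justify it via the trace formula, and you correctly flag that the subscript $\mathbb{R}^{d^2}$ on the squared inner product in the statement is a typo for $\mathbb{R}^d$.
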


%
%

\begin{proof}
For the coupled bilinear composition, we have
\begin{equation}
\label{eq:proof1}
\begin{split}
&\left\langle \mathcal{B}_c({\bm x_{i}}), \mathcal{B}_c({\bm x_{j}}) \right\rangle_{\mathbb{R}^{d^2}} \\
&=\left\langle \sum_{\tau \in \mathcal{N}(i)} vec\left( \omega_{\tau}{\bm x}_{\tau}  \tens{} {\bm x}_{\tau} \right), 
 \sum_{\tau' \in \mathcal{N}(j)}  vec\left(\omega_{\tau'}{\bm x}_{\tau'} \tens{} {\bm x}_{\tau'} \right) \right\rangle \\
 &= \sum_{\tau \in \mathcal{N}(i)} \sum_{\tau' \in \mathcal{N}(j)} \omega_{\tau} \omega_{\tau'}   \left\langle vec\left({\bm x}_{\tau} \tens{} {\bm x}_{\tau} \right), vec\left({\bm x}_{\tau'} \tens{} {\bm x}_{\tau'} \right) \right\rangle \\
 &= \sum_{\tau \in \mathcal{N}(i)} \sum_{\tau' \in \mathcal{N}(j)}\omega_{\tau} \omega_{\tau'}  \left\langle  {\bm x}_{\tau},  {\bm x}_{\tau'}  \right\rangle_{\mathbb{R}^d}^2.
\end{split}
\end{equation}

For the decoupled bilinear composition, we have
\begin{align}
\left\langle \mathcal{B}_d({\bm x_{i}}), \mathcal{B}_d({\bm x_{j}}) \right\rangle_{\mathbb{R}^{d(d+1)}} &= \langle {\bm \mu}_i,{\bm \mu}_j \rangle_{\mathbb{R}^d} \nonumber \\
& +  \langle vec({\bm \Sigma}_i), vec({\bm \Sigma}_j) \rangle_{\mathbb{R}^{d^2}},
\end{align}
and hence can obtain Eq. \eqref{eq:decouple_form} following the derivation in Eq. \eqref{eq:proof1}.
\end{proof}

{
One can see from Proposition \ref{prop:inner_product} that the inner product defined w.r.t. $\mathcal{B}_c(\cdot)$ can be expressed in terms of the 2nd-degree homogeneous polynomial kernel $k(\mathbf{x},\mathbf{x}') = \langle \mathbf{x},\mathbf{x}'\rangle^2$. In general, the dimension of $\mathcal{B}_c(\cdot)$ increases \emph{exponentially} with the degree of the polynomial kernel, making it less practical when used explicitly in a deep neural net. Motivated by the fact that for a specific kernel $k(\cdot, \cdot)$, the associated feature mapping $\phi: {\bm X} \to \mathcal{H}$ is not unique, we derive a feature mapping that corresponds to the same kernel as $\mathcal{B}_c(\cdot)$, but has lower dimension. The proposed method reduces the number of parameters to be learned without sacrificing the representativeness. In particular, we show that:
}

\begin{prop}\label{prop2}
Let $\mathcal{B}_c(\bx) \in \mathbb{R}^{d^2}$ be the bilinear composition and $\phi_c(\bx) \in \mathbb{R}^{\frac{d(d+1)}{2}}$ a feature mapping defined by
\begin{equation}
\phi_c(\bx)=( \underbrace{x_1^2,...,x_d^2}_{d \text{ terms}}, \underbrace{\sqrt{2}x_1x_2, \sqrt{2}x_1x_3,...,\sqrt{2}x_{d-1}x_d}_{C(d,2) \text{ terms}} )^T .
\end{equation}
Then, it follows that for any $\bx,{\bx}'\in\mathbb{R}^d$, $$\langle \mathcal{B}_c(\bx),\mathcal{B}_c({\bx}') \rangle _{\mathbb{R}^{d^2}} = \langle \phi_c(\bx), \phi_c({\bx}') \rangle_{\mathbb{R}^{\frac{d(d+1)}{2}}}.$$

Equivalently, the second-order component defined in Eq. \eqref{eq:bilinear2} has a lower-dimensional alternative, so that $\mathcal{B}_d(\bx) \in \mathbb{R}^{d(d+1)}$ can be replaced by $\phi_d(\bx) \in \mathbb{R}^{\frac{d(d+3)}{2}}$.

\end{prop}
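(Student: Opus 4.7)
The plan is to reduce both parts of the claim to a direct polynomial expansion of the inner product, and then invoke bilinearity to lift the single-vector identity to the full temporal-neighborhood sums already treated in Proposition \ref{prop:inner_product}. Concretely, for the coupled case I would expand $\langle \mathcal{B}_c(\bx),\mathcal{B}_c(\bx')\rangle$ entry-by-entry as $\sum_{i,j=1}^d x_i x_j x'_i x'_j$, which collapses to $\langle \bx,\bx'\rangle_{\mathbb{R}^d}^2$, and separately expand $\langle \phi_c(\bx),\phi_c(\bx')\rangle$ using the explicit coordinates given in the statement. The essential observation is that the $d^2$-dimensional sum double-counts every off-diagonal pair $(i,j)$ with $i\neq j$, whereas the $\tfrac{d(d+1)}{2}$-dimensional sum lists each unordered pair exactly once with a factor $\sqrt{2}$ in the feature map, which contributes a factor $2$ to the inner product. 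Matching the diagonal terms $x_i^2 (x'_i)^2$ and the off-diagonal terms then shows both equal $\langle \bx,\bx'\rangle_{\mathbb{R}^d}^2$, proving the identity.

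For the decoupled case, I would exploit the block structure $\mathcal{B}_d(\bx)=(\bm\mu^T, vec(\bm\Sigma)^T)^T$, which makes the ambient inner product split as $\langle \bm\mu,\bm\mu'\rangle_{\mathbb{R}^d} + \langle vec(\bm\Sigma),vec(\bm\Sigma')\rangle_{\mathbb{R}^{d^2}}$. The first block is already of dimension $d$ and needs no modification; on the second block I would apply the coupled result to the centred vectors $\bx_\tau - \bm\mu$ appearing in Eq.~\eqref{eq:bilinear2}, pulling the scalar weights $q_\tau$ out of the inner product exactly as in line~(14) of the proof of Proposition \ref{prop:inner_product}. This yields a second-order block $\phi_c$-style vector of dimension $\tfrac{d(d+1)}{2}$, and the total dimension is $d + \tfrac{d(d+1)}{2} = \tfrac{d(d+3)}{2}$, as claimed.

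The lift from a single $\bx$ to the full sums defining $\mathcal{B}_c$ and $\mathcal{B}_d$ is immediate by bilinearity of the inner product and linearity of $vec(\cdot)$: since $\mathcal{B}_c(\bx_t) = \sum_\tau \omega_\tau\, vec(\bx_\tau \tens{} \bx_\tau)$, replacing each summand $vec(\bx_\tau \tens{} \bx_\tau)$ by $\phi_c(\bx_\tau)$ preserves every cross-term in the expansion of the inner product. No step requires anything beyond elementary algebra; the only real obstacle is bookkeeping the $\sqrt{2}$ scaling carefully so that the diagonal and off-diagonal contributions reassemble into the full $d^2$-dimensional sum. Because this identity establishes that $\phi_c$ and $\mathcal{B}_c$ induce the \emph{same} kernel, they are feature mappings into isomorphic RKHSs, so no information is lost by using $\phi_c$ (resp.\ $\phi_d$) in place of the explicit bilinear composition.
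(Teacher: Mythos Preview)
Your proposal is correct and follows essentially the same approach as the paper: the paper's proof is a one-line sketch that invokes the commutative property of the tensor product and then ``expands the polynomials in Eq.~\eqref{eq:proof1} and combines equivalent terms,'' which is exactly the diagonal/off-diagonal bookkeeping with the $\sqrt{2}$ factor that you spell out. Your treatment of the decoupled case via the block structure and the lift to the neighborhood sums by bilinearity are likewise implicit in the paper's appeal to Eq.~\eqref{eq:proof1}.
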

Due to the commutative property of tensor product, the above proposition can be proved by expanding the polynomials in Eq. \eqref{eq:proof1} and combining equivalent terms. 


Proposition \ref{prop2} shows that $\mathcal{B}_c(\cdot)$ and $\phi_c(\cdot)$ are equivalent in the sense that the corresponding kernel is the same. The advantage of using $\phi_c(\cdot)$ instead of $\mathcal{B}_c(\cdot)$ is that it has much lower dimension. For example, if each feature vector in the input sequence is 128-dimensional, $\mathcal{B}_c$ is 16384-dimensional and $\mathcal{B}_d$ is 16512-dimensional. On the other hand, the alternative feature representations $\phi_c$ is 8256-dimensional and $\phi_d$ is 8384-dimensional, approximately halving the dimensionality without losing information and without introducing extra computation.

\section{Experiment}
\label{sec:experiment}

\subsection{Datasets}
In our experiments, the input feature time sequence to the TCED net is extracted from RGB videos using a pre-trained VGG-like network \cite{lea2016segmental}, and is downsampled to achieve the same temporal resolution as \cite{lea_2017_cvpr} for fair comparison.

\myparagraph{50 Salads \cite{stein2013combining}.} 
This multi-modal dataset collects 50 recordings from 25 people preparing 2 mixed salads, and each recording lasts 5-10 minutes. The RGB video has spatial resolution of 640x480 pixels and frame rate of 30 fps. The annotation is performed at two levels: (1) the {\em eval-level} incorporating 9 actions such as ``cut'', ``peel'' and ``add dressing'', and (2) the {\em mid-level} incorporating 17 fine-grained actions, derived from the high-level actions. 
Therefore, we obtain two sets from {\bf 50 Salads}, namely {\bf 50 Salads-eval} and {\bf 50 Salads-mid}. The recordings are equally split into 5 folds for cross-validation.

\myparagraph{Georgia Tech Egocentric Activity Datasets (GTEA) \cite{fathi2011learning} \cite{li2015delving}.}
This dataset contains 7 daily living activities of 4 subjects. The videos are captured from the egocentric view at 15 fps with the resolution of 1280x720 pixels and there are 31,222 frames in the dataset. We follow the settings in \cite{lea2016segmental} \cite{lea_2017_cvpr}: For each video, frame-wise labels from 11 action classes are annotated. The evaluation is based on the leave-one-subject-out scheme, namely performing cross-validation on 4-fold splits.

\myparagraph{JHU-ISI Gesture and Skill Assessment Working Set (JIGSAWS) \cite{gao2014jhu} \cite{ahmidi2017dataset}.} In our study we only use the videos of ``suturing'' since it has more trials than other tasks. The ``suturing'' task comprises 10 actions like ``tie a knot'', ``insert needle into skin'' and so forth. Each video is approx. 2 minutes and contains 15 to 37 actions, which have considerably different occurrence orders from different surgeons. Similar to GTEA, in our experiments we perform evaluations in the leave-one-surgeon-out scheme.

\subsection{Evaluation Metrics}

\myparagraph{Frame-wise accuracy.} 
The frame-wise accuracy is defined as the correctly classified frames divided by the number of all frames. Intuitively, such measure evaluates the accuracy from the frame-wise classification perspective. However, it ignores the temporal regularity and the action occurrence order in the label sequence.

\myparagraph{Edit score \cite{lea2016segmental}.}
The edit score evaluates the temporal order of action occurrence, ignores the action temporal durations and only considers segment insertions, deletions and substitutions. Thus, such metric is useful for scenarios, where the action order is essential, e.g. cooking, manufacturing, surgery and so forth. However, the edit score can be strongly penalized by tiny predicted segments, and hence highly degraded by over-segmentation results.

\myparagraph{F1 score \cite{lea_2017_cvpr}.} 
The F1 score is for evaluation in terms of action detection, where the true positives are defined by segments whose action label is same to the ground truth and the {\em intersection-over-union} of the overlap with the ground truth is greater than 0.1. Thus, it is invariant to small temporal shifts between detection and the ground truth. However, the F1-score is penalized by over-segmentation as well, since lots of tiny segments can result in a low precision rate.


\subsection{Analysis of the Bilinear Forms}
\label{sec:max_vs_bi}

We use the {\bf 50 Salads-mid} dataset to perform model analysis, because it has more fine-grained action types and longer video recordings than the other mentioned datasets. 

\begin{figure}[t]
  \centering
  \includegraphics[width=1.\linewidth]{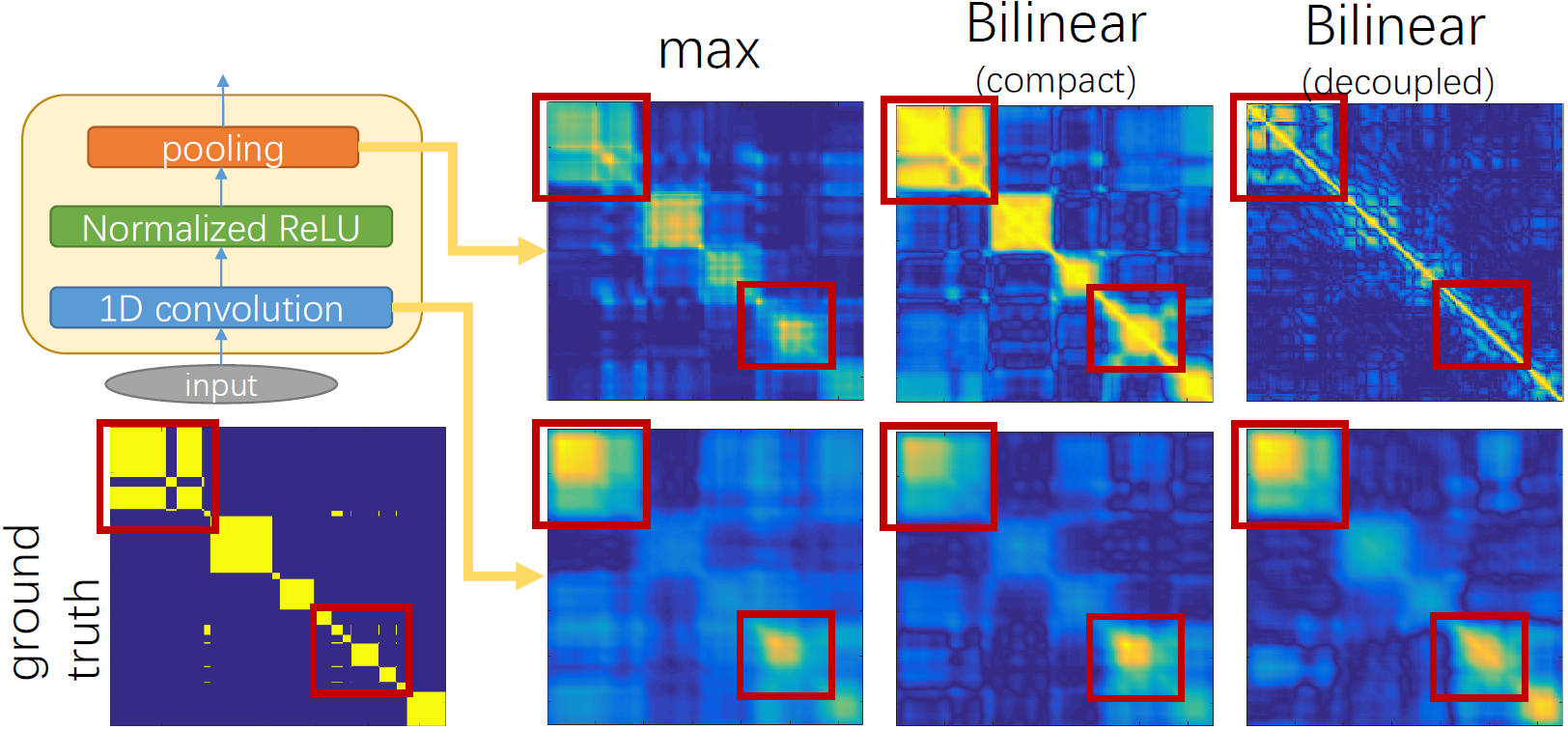}
  \caption{\small{We use the features from the first encoder of TCED to show frame similarities of ``rgb-01-1.avi'' in {\bf 50 Salads-mid} \cite{stein2013combining}. The similarity of two frame features ${\bm x}_i$ and ${\bm x}_j$ is defined as $| \langle {\bm x}_i, {\bm x}_j \rangle|$. The similarity of two (one-hot) frame labels, regarded as ground truth, is computed in the same way. The bilinear pooling outputs are power-and-l2 normalized. Entries in similarity matrices range between 0 (blue) and 1 (yellow). {\color{red} Red rectangles} contain some fine-grained actions. One can see that bilinear pooling is better at recognizing fine-grained actions, but can decompose coarse-grained actions.}}
  \vspace{-4mm}
 \label{fig:teaser}
\end{figure}

\myparagraph{The benefits of second-order information.}
Fig.~\ref{fig:teaser} illustrates the comparisons between pooling methods, where the compact bilinear pooling \cite{gao2016compact} output has the {\bf same} dimension as the max pooling:
{(1)} The first row in Fig.~\ref{fig:teaser} clearly shows that bilinear pooling can capture fine-grained actions better than max pooling, whose output features tend to merge fine-grained actions into coarse-grained ones. Our proposed decoupled bilinear pooling with full second-order information performs better at recognizing fine-grained actions and suppressing off-diagonal elements. However, it can break a coarse-grained action into several segments. 
The compact bilinear pooling outperforms max pooling on the diagonal elements, which can clearly show that the advantage of bilinear pooling is due to the second-order information rather than higher dimensionality. However, the large off-diagonal values indicate the drawback of the dimension reduction method with approximation.
{(2)} The second row in Fig.~\ref{fig:teaser} illustrates that bilinear pooling improves the convolution layer via backpropagation. With max pooling, many off-diagonal elements are similar to the diagonal elements, which differ from the ground truth pattern considerably. However, with bilinear pooling, the matrix patterns are more similar to the ground truth. 

Furthermore, we conduct a quantitative comparison on the {\em first split} of {\bf 50 Salads-mid}. In the format of {\em accuracy/edit-score/F1-score}, max pooling yields 71.03/71.8/73.09, compact bilinear pooling yields 75.41/73.75/78.96, coupled bilinear pooling yields 76.56/75.32/79.84 and decoupled bilinear pooling yields 75.11/71.06/75.79. One can see that bilinear pooling consistently outperforms max pooling. The comparison between max pooling and compact bilinear pooling also indicates the importance of the second-order information.

\myparagraph{Comparison of different bilinear forms.}
Here we analyze the influence of the learnable weights in the proposed bilinear forms $\mathcal{B}_c$ and $\mathcal{B}_d$.
We denote the corresponding non-learnable bilinear forms in Eq. \eqref{eq:bilinear_standard} and Eq. \eqref{eq:decoupled_bilinear} as $\mathcal{B}^{o}_c$ and $\mathcal{B}^{o}_d$, respectively.
As shown in the top row of Fig.~\ref{fig:ablation_1}, both for the coupled and decoupled bilinear forms, the one with learnable weights consistently outperforms the non-learnable counterpart, in terms of the evaluation metrics and the robustness to the neighborhood size $|\mathcal{N}|$. This outcome is more obvious when the neighbor size is larger. This result can indicate that the learnable weights, i.e. $\{\omega_{\tau}\}$, $\{p_{\tau}\}$ and $\{q_{\tau}\}$ in equations \eqref{eq:bilinear1} and \eqref{eq:bilinear2}, enable the derived bilinear forms to capture more complex local temporal statistics, comparing to the standard average aggregation. Thus, in the following experiments, we only use the learnable bilinear forms.
Furthermore, the decoupled bilinear form outperforms the coupled version on all the three metrics.
Specifically, the decoupled bilinear form achieves {\em 66.3/64.63/70.74} in the format of {\em accuracy/edit score/F1 score}, while the best performance of the coupled bilinear form is {\em 64.73/62.15/68.89} and the baseline model (TCN$_{max}$ \cite{lea_2017_cvpr}) achieves {\em 64.7/59.8/68.0}.

In the bottom row of Fig.~\ref{fig:ablation_1}, we show the performance of the first-order component and the second-order component of the decoupled bilinear form. One can observe that 
the results derived using individual components are inferior to the results using combined bilinear forms. This fits our conjecture that first and second-order components tend to describe independent and complementary patterns in data.  


\begin{figure}[t!]
\centering
\includegraphics[width=0.485\textwidth]{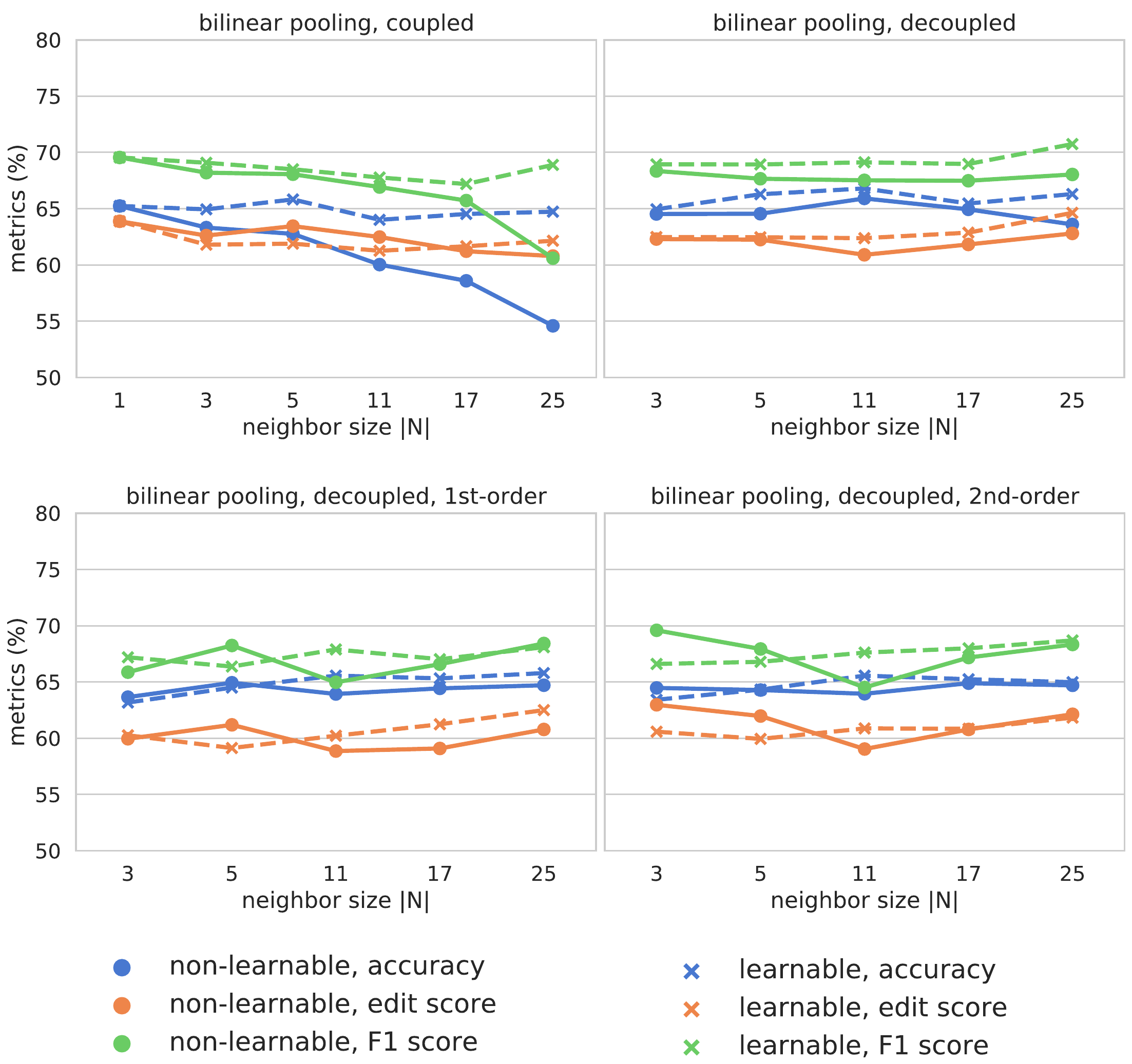}
\caption{The performances w.r.t. the neighbor size $|\mathcal{N}|$ and the learnability of the weights. From top to bottom: (1) The performances of the coupled bilinear form $\mathcal{B}_c$ and the decoupled bilinear form $\mathcal{B}_d$. (2) The performances of each ingredient in the decoupled bilinear form $\mathcal{B}_d$, in which the first-order component and the second-order component are demonstrated in Eq. \ref{eq:bilinear2}.}
\label{fig:ablation_1}
\end{figure}

%
%

\myparagraph{Normalization and activation.}
Here we investigate the influence of normalization and compare different activation functions. In each individual experiment the neighborhood size of both bilinear forms are identical.
First, different normalization methods are compared in Tab.~\ref{tab:norm}. One can see that $l_2$ normalization and $l_1$ normalization perform almost equally, while the normalized ReLU activation function consistently outperforms others. This result indicate that the max-normalization in intermediate layers is more suitable than others to constrain the bilinear form spectrum. 
Second, we show the influence of the activation functions in Tab.~\ref{tab:actfun}. The bilinear forms are $l_2$ normalized, except for the case of {\bf NReLU}. In our experiment, training with other activation functions without $l_2$ normalization hardly converges, indicating the importance of constraining the spectrum of the bilinear form. Tab.~\ref{tab:actfun} indicates that the {\bf NReLU} function consistently yields superior results, suggesting that our task benefits from  the sparse features.

\begin{table}[h!]
\centering
\small%
\begin{tabular}{l|ll}

& $\mathcal{B}_c$ & $\mathcal{B}_d$ \\
\hline
{\bf NReLU} & {\bf 65.82}/{\bf 61.89}/{\bf 68.5} & {\bf 66.28}/{\bf 62.46}/{\bf 68.93} \\
{\bf NReLU}+$l_2$ & 64.92/60.01/67.33 & 66.09/60.02/67.38 \\
{\bf NReLU}+$l_1$ & 64.22/60.04/65.86 & 66.48/63.04/68.71\\
{\bf ReLU}+$l_2$ & 64.87/59.88/66.31 & 64.7/61.45/68.04 \\
{\bf ReLU}+$l_1$ & 63.05/58.29/65.62 & 59.76/58.68/64.39 \\ 
 \hline
\end{tabular}
\caption{Comparison of different normalization methods, in which the performances are presented in terms of {\em accuracy/edit score/F1 score} and the best ones for each model are highlighted in boldface.}
\label{tab:norm}

\end{table}

\begin{table*}
\centering
\small
\begin{tabular}{l|llllll}

& {\bf ReLU} \cite{nair2010rectified} & {\bf leaky ReLU} \cite{maas2013rectifier} & {\bf swish} \cite{swish} & {\bf NReLU (Eq. \ref{eq:nrelu})} \cite{lea_2017_cvpr} & {\bf RPN (Eq. \ref{eq:rpn})} & {\bf linear} \\

\hline
max & 61.13/53.13/59.78 & 54.97/48.51/55.58 & 56.51/47.06/52.39 & {\bf 63.55}/{\bf 60.37}/{\bf 64.88} & 62.65/54.89/63.05 & 12.59/11.51/8.63\\

$\mathcal{B}_c$ & 65.5/61.14/68.4 & 66.4/{\bf 61.72}/{\bf 69.08} & {\bf 66.77}/59.16/67.49 & 64.01/61.26/67.77 & 64.05/56.48/64.77 & 63.47/48.34/55.87\\

$\mathcal{B}_d$ & 64.7/61.45/68.04 & 65.56/53.55/61.45 & 62.51/49.26/56.64 & {\bf 66.8}/{\bf 62.38}/{\bf 69.12} & 63.18/50.41/58.39 & 65.51/48.31/56.96\\

\hline

\end{tabular}
\caption{The performances with different pooling methods and activation functions are presented in the format of {\em accuracy/edit score/F1 score}, in which for each model the best results are highlighted in boldface. }
\label{tab:actfun}

\end{table*}

%
%
%
%
%

\subsection{Low-dimensional Representation}
One of our key contributions is to derive lower-dimensional alternatives to the explicit bilinear compositions. Comparing other dimension reduction methods, our method does not suffer from any information loss nor do we have extra computation. We compare different low-dimensional representations in the lower parts of Table \ref{tab:compare_50salads_mid}, \ref{tab:compare_50salads_eval}, \ref{tab:compare_gtea} and \ref{tab:compare_jigsaws}. The {\em tensor sketch} technique \cite{gao2016compact} reduces each feature outer product from $d^2$ to $\frac{d(d+1)}{2}$ for fair comparison. In addition, the {\em LearnableProjection} \cite{lin2018bilinear} is implemented by a temporal convolution layer with the kernel size of 1, and the reduced dimensions are equal to $\phi_c$ and $\phi_d$ respectively for fair comparison. Note that, in our trials, other dimension reduction methods (especially the ones employing SVD) used in our local temporal pooling cause very high computational cost, and hence are not compared. For each listed method we tested different neighborhood sizes of 5, 11 and 25, and present the best performance. Our results show that the proposed low-dimensional representations consistently outperform other dimension reduction methods. In particular, on the {\bf 50 Salads-mid} dataset, $\phi_d$ considerablely outperforms the {\em LearnableProjection} counterpart, in which the accuracy is improved by $5.6\%$, the edit score is improved by $6.2\%$ and the F1 score is improved by $6.1\%$.

\vspace{-4mm}

\subsection{Comparison with State-of-the-art}

Table \ref{tab:compare_50salads_mid}, \ref{tab:compare_50salads_eval}, \ref{tab:compare_gtea} and \ref{tab:compare_jigsaws} show the performances of different methods on the datasets {\bf 50 Salads-mid}, {\bf 50 Salads-eval}, {\bf GTEA} and {\bf JIGSAWS}, respectively, in which TCED$_{X}$ denotes the temporal convolutional encoder-decoder with the pooling method $X$. 
For each method with local temporal pooling, we perform grid search on the neighborhood sets of 5, 11 and 25, and present the best one. From the tables, we can see that our proposed method can be generalized well across different datasets and produces superior or comparable performances than other methods. In {\bf 50 Salads-mid}, the dataset with more fine-grained action types and longer videos than other datasets, the decoupled bilinear form, as well as its lower-dimensional representation outperform other methods for all the evaluation metrics. In {\bf 50 Salads-eval}, the performance of our methods are comparable with others while with lower edit scores, probably because actions in this dataset is not sufficiently fine-grained but our bilinear pooling produces more segments than others. Furthermore, more training epochs can increase the accuracy yet decrease the edit score and the F1 score for our bilinear pooling models, in contrast to the max pooling baseline model. For example, after 300 epochs, TCED$_{\mathcal{B}_d}$ yields 74.7/59.2/66.7 and TCED$_{max}$ yields 63.6/71.9/75.2 for the {\bf GTEA} dataset.

\begin{table}
\centering
\begin{tabular}{l|l}

{\bf Method} & {\bf Result}\\
\hline
\hline
Spatial CNN \cite{lea2016segmental} & 54.9/24.8/32.3 \\
Spatiotemporal CNN \cite{lea2016segmental} & 59.4/45.9/55.9 \\
IDT+LM \cite{richard2016temporal} & 48.7/45.8/44.4 \\
Dilated TCN \cite{lea_2017_cvpr} & 59.3/43.1/52.2 \\
Bidirectional LSTM \cite{lea_2017_cvpr} & 55.7/55.6/62.6 \\
TCED$_{max}$ \cite{lea_2017_cvpr} & 64.7/59.8/68.0 \\
\hline
TCED$_{\mathcal{B}_c}$ & 65.8/61.9/68.5\\
TCED$_{\mathcal{B}_d}$ & {\bf 66.3}/{62.5}/{68.9}\\
\hline
\hline
TCED$_{TensorSketch}$ \cite{gao2016compact} & 63.4/62.6/68.5 \\
TCED$_{\mathcal{B}_c, LearnableProjection}$ & 61.8/58.2/64.4 \\ 
TCED$_{\mathcal{B}_d, LearnableProjection}$ & 60.1/56.6/62.9 \\ 
\hline
TCED$_{\phi_c}$ & 64.7/61.3/66.8 \\ 
TCED$_{\phi_d}$ & 65.7/{\bf 62.8}/{\bf 69.0} \\ 
\hline
\end{tabular}
\caption{The comparison in {\bf 50 Salads-mid}, where the results are shown in the format of {\em accuracy/edit score/F1 score}. The upper part shows the comparison with other action parsing methods and the lower part shows the comparison of different dimension reduction methods. The best results are highlighted in boldface.}
\label{tab:compare_50salads_mid}

\end{table}

\begin{table}
\centering
\begin{tabular}{l|l}

{\bf Method} & {\bf Result}\\
\hline
\hline
Spatial CNN \cite{lea2016segmental} & 68.0/25.5/35.0 \\
Spatiotemporal CNN \cite{lea2016segmental} & 71.3/52.8/61.7 \\
Dilated TCN \cite{lea_2017_cvpr} & 71.1/46.9/55.8 \\
Bidirectional LSTM \cite{lea_2017_cvpr} & 70.9/67.7/72.2 \\
TCED$_{max}$ \cite{lea_2017_cvpr} & 73.4/{\bf 72.2}/{\bf 76.5} \\
\hline
TCED$_{\mathcal{B}_c}$ & 74.2/71.2/75.5\\
TCED$_{\mathcal{B}_d}$ & {\bf 75.9}/71.3/76.2\\
\hline
\hline 
TCED$_{TensorSketch}$ \cite{gao2016compact} & 71.9/70.9/75.1 \\
TCED$_{\mathcal{B}_c, LearnableProjection}$ & 72.0/68.8/73.4 \\ 
TCED$_{\mathcal{B}_d, LearnableProjection}$ & 71.3/68.9/72.6\\ 
\hline
TCED$_{\phi_c}$ & 74.0/71.0/{\bf 76.5} \\ 

TCED$_{\phi_d}$ & 75.6/70.4/76.0 \\ 
\hline
\end{tabular}
\caption{The comparison in {\bf 50 Salads-eval}.}
\label{tab:compare_50salads_eval}

\end{table}

\begin{table}
\centering
\begin{tabular}{l|l}

{\bf Method} & {\bf Result}\\
\hline \hline
EgoNet+TDD \cite{singh2016first} & { 64.4}/-/- \\
Spatial CNN \cite{lea2016segmental} & 54.8/28.7/38.3 \\
Spatiotemporal CNN \cite{lea2016segmental} & 57.6/49.1/56.7 \\
Spatiotemporal CNN+Seg \cite{lea2016segmental} & 52.6/53.0/57.7 \\
Dilated TCN \cite{lea_2017_cvpr} & 58.0/40.7/51.3 \\
Bidirectional LSTM \cite{lea_2017_cvpr} & 56.2/41.3/50.2 \\
TCED$_{max}$ \cite{lea_2017_cvpr} & 63.5/{71.9}/75.2 \\
\hline
TCED$_{\mathcal{B}_c}$ & 63.6/71.7/76.4\\
TCED$_{\mathcal{B}_d}$ & 63.4/70.9/{\bf 76.8}\\
\hline
\hline
TCED$_{TensorSketch}$ \cite{gao2016compact} & 59.8/71.2/75.2 \\
TCED$_{\mathcal{B}_c, LearnableProjection}$ & 58.4/68.2/71.9\\ 
TCED$_{\mathcal{B}_d, LearnableProjection}$ & 58.8/70.5/74.9\\
\hline
TCED$_{\phi_c}$ & {\bf 64.5}/71.8/75.0 \\ 
TCED$_{\phi_d}$ & 64.4/{\bf 73.9}/76.3 \\ 
\hline
\end{tabular}
\caption{The comparison in {\bf GTEA}, in which the symbol ``-'' denotes that the score is not available. }
\label{tab:compare_gtea}
\end{table}

\begin{table}
\centering
\begin{tabular}{l|l}

{\bf Method} & {\bf Result}\\
\hline \hline
Spatial CNN \cite{lea2016segmental}& 74.1/37.7/51.6\\
Spatiotemporal CNN \cite{lea2016segmental} & 77.9/67.1/77.7 \\
Spatiotemporal CNN+Seg \cite{lea2016segmental} & 74.4/73.7/82.2 \\
Dilated TCN \cite{lea_2017_cvpr} & 78.0/56.8/69.7 \\
Bidirectional LSTM \cite{lea_2017_cvpr} & 74.4/73.7/82.2 \\
TCED$_{max}$ \cite{lea_2017_cvpr} & 81.2/85.6/90.3 \\
\hline
TCED$_{\mathcal{B}_c}$ & {\bf 82.6}/85.6/90.4\\
TCED$_{\mathcal{B}_d}$ & 82.2/{\bf 87.7}/{\bf 91.4}\\
\hline 
\hline 
TCED$_{TensorSketch}$ \cite{gao2016compact} & 80.8/85.4/90.1 \\
TCED$_{\mathcal{B}_c, LearnableProjection}$ & 79.7/82.8/88.1\\ 
TCED$_{\mathcal{B}_d, LearnableProjection}$ & 81.6/83.0/89.0\\
\hline
TCED$_{\phi_c}$ & 81.8/85.1/90.0 \\ 
TCED$_{\phi_d}$ & 81.7/85.1/90.5 \\ 
\hline 
\end{tabular}
\caption{The comparison in {\bf JIGSAWS}. }
\label{tab:compare_jigsaws}

\end{table}

\section{Conclusion}
\label{sec:conclusion}
To our knowledge, we are the first to use bilinear pooling to a temporal convolutional encoder-decoder for action parsing. To enrich representativeness, we decouple the first and the second-order information from the conventional bilinear form and modify the averaging operation to convolution with a learnable filter. To reduce dimensionality, we introduce lower-dimensional representations of the bilinear forms with neither information loss nor extra computation. We conduct several detailed experiments to analyze the bilinear forms, and show superior performances to state-of-the-art pooling methods for action parsing. A future work is to investigate higher-order pooling with information lossless dimension reduction approaches.

\noindent{\bf Acknowledgements.}
Y. Z. and H. N. acknowledge funding by the BMBF project SenseEmotion. 
S. T. acknowledges funding by Deutsche Forschungsgemeinschaft (DFG, German Research Foundation) – Projektnummer 276693517 – SFB 1233.
We faithfully acknowledge Dr. Colin Lea (Facebook) to provide frame-wise features of the datasets.

{\small
\bibliographystyle{unsrt}
\bibliography{reference_merge}
}

\end{document}